
\documentclass{article}

\usepackage{graphicx} 
\usepackage{subcaption} 

\usepackage[space]{grffile}

\usepackage{color}
\usepackage{amsmath}
\usepackage{amsthm}
\usepackage{float}
\usepackage[titletoc,title]{appendix}
\usepackage{amsfonts}
\usepackage{thmtools,thm-restate}

\usepackage{natbib}

\usepackage{algorithm}
\usepackage{algorithmic}

\usepackage{hyperref}


\newcommand{\beginsupplement}{%
        \setcounter{table}{0}
        \renewcommand{\thetable}{S\arabic{table}}%
        \setcounter{figure}{0}
        \renewcommand{\thefigure}{S\arabic{figure}}%
     }




\newif\ifpaper

\papertrue

\ifpaper
    \usepackage{times}
    \usepackage[accepted]{icml2016}
    \icmltitlerunning{Interactive Bayesian Hierarchical Clustering}

\else
    \usepackage[margin=1in]{geometry}
    \title{Interactive Bayesian Hierarchical Clustering}
    
    \author{
    Sharad Vikram \& Sanjoy Dasgupta\\ 
    Computer Science and Engineering \\ 
    University of California, San Diego \\ 
    \texttt{\{svikram, dasgupta\}@cs.ucsd.edu}
    }
\fi

\begin{document} 
\ifpaper
    \twocolumn[
    \icmltitle{Interactive Bayesian Hierarchical Clustering}
   
    \icmlauthor{Sharad Vikram}{svikram@cs.ucsd.edu}
    \icmlauthor{Sanjoy Dasgupta}{dasgupta@cs.ucsd.edu}
    \icmladdress{Computer Science and Engineering, UCSD,
                9500 Gilman Drive, La Jolla, CA 92093}
    \icmlkeywords{machine learning}
    
    \vskip 0.3in
    ]
\else
    \maketitle
\fi

\begin{abstract}
Clustering is a powerful tool in data analysis, but it is often difficult to 
find a grouping that aligns with a user's needs.
To address this, several methods incorporate 
constraints
obtained from users into clustering algorithms,
but unfortunately do not apply to hierarchical clustering.
We design an \emph{interactive Bayesian} algorithm that incorporates user interaction
into hierarchical clustering while still utilizing
the geometry of the data by
sampling a constrained posterior
distribution over hierarchies. 
We also suggest several ways
to intelligently query a user. 
The algorithm, along with the querying schemes, shows promising
results on real data.
\end{abstract}

\section{Introduction}

{\it Clustering} is a basic tool of exploratory data analysis. There are a variety of efficient algorithms---including $k$-means, EM for Gaussian mixtures, and hierarchical agglomerative schemes---that are widely used for discovering ``natural'' groups in data. Unfortunately, they don't always find a grouping that suits the user's needs.


This is inevitable. In any moderately complex data set, there are many different plausible grouping criteria. Should a collection of rocks be grouped according to value, or shininess, or geological properties? Should animal pictures be grouped according to the Linnaean taxonomy, or cuteness? Different users have different priorities, and an unsupervised algorithm has no way to magically guess these.

As a result, a rich body of work on {\it constrained} clustering has emerged.
In this setting, a user supplies guidance, typically in the form of ``must-link'' or ``cannot-link'' constraints, pairs of points that must be placed together or apart. Introduced by \citet{WC00}, these constraints have since been incorporated into many different {\it flat} clustering procedures~\citep{WCRS01,BBC04,BBM04,KBDM05,BJ14}.

In this paper, we introduce constraints to {\it hierarchical clustering}, the recursive partitioning of a data set into successively smaller clusters to form a tree. A hierarchy has several advantages over a flat clustering. First, there is no need to specify the number of clusters in advance. Second, the tree captures cluster structure at multiple levels of granularity, simultaneously. As such, trees are particularly well-suited for exploratory data analysis and the discovery of natural groups.

There are several well-established methods for hierarchical clustering, the most prominent among which are the bottom-up agglomerative methods such as average linkage (see, for instance, Chapter 14 of \citet{HTF09}). But they suffer from the same problem of under-specification that is the scourge of unsupervised learning in general. And, despite the rich literature on incorporating additional guidance into flat clustering, there has been relatively little work on the hierarchical case.

What form might the user's guidance take? The usual must-link and cannot-link constraints make little sense when data has hierarchical structure. Among living creatures, for instance, should {\tt elephant} and {\tt tiger} be linked? At some level, yes, but at a finer level, no. A more straightforward assertion is that {\tt elephant} and {\tt tiger} should be linked in a cluster that does not include {\tt snake}. We can write this as a {\it triplet} $(\{\mbox{\tt elephant}, \mbox{\tt tiger}\}, \mbox{\tt snake})$. We could also assert $(\{\mbox{\tt tiger}, \mbox{\tt leopard}\}, \mbox{\tt elephant})$. Formally, $(\{a,b\},c)$ stipulates that the hierarchy contains a subtree (that is, a cluster) containing $a$ and $b$ but not $c$.

A wealth of research addresses learning taxonomies from triplets {\it alone}, mostly in the field of phylogenetics: see \citet{F04} for an overview, and \citet{ASSU81} for a central algorithmic result. Let's say there are $n$ data items to be clustered, and that the user seeks a particular hierarchy $T^*$ on these items. This $T^*$ embodies at most ${n \choose 3}$ triplet constraints, possibly less if it is not binary. It was pointed out in \citet{TLBSK11} that roughly $n \log n$ carefully-chosen triplets are enough to fully specify $T^*$ if it is balanced. This is also a lower bound: there are $n^{\Omega(n)}$ different labeled rooted trees, so each tree requires $\Omega(n \log n)$ bits, on average, to write down---and each triple provides $O(1)$ bits of information, since there are just three possible outcomes for each set of points $a,b,c$. Although $n \log n$ is a big improvement over $n^3$, it is impractical for a user to provide this much guidance when the number of points is large. In such cases, a hierarchical clustering cannot be obtained on the basis of constraints alone; the geometry of the data must play a role.

We consider an interactive process during which a user incrementally adds constraints.
\begin{itemize}
\item Starting with a pool of data $X \subseteq \mathbb{R}^d$, the machine builds a candidate hierarchy $T$.
\item The set of constraints $C$ is initially empty.
\item Repeat:
\begin{itemize}
\item The machine presents the user with a small portion of $T$: specifically, its restriction to $O(1)$ leaves $S \subset X$. We denote this $T |_S$.
\item The user either accepts $T|_S$, or provides a triplet constraint $(\{a,b\},c)$ that is violated by it.
\item If a triplet is provided, the machine adds it to $C$ and modifies the tree $T$ accordingly.
\end{itemize}
\end{itemize}
In realizing this scheme, a suitable clustering algorithm and querying strategy must be designed. Similar issues have been confronted in flat clustering---with must-link and cannot-link constraints---but the solutions are unsuitable for hierarchies, and thus a fresh treatment is warranted.

\subsubsection*{The clustering algorithm}

What is a method of hierarchical clustering that takes into account the geometry of the data points as well as user-imposed constraints? 

We adopt an \emph{interactive Bayesian} approach. The learning procedure is uncertain about the intended tree and this uncertainty is captured in the form of a distribution over all possible trees. Initially, this distribution is informed solely by the geometry of the data but once interaction begins, it is also shaped by the growing set of constraints.

The nonparametric Bayes literature contains a variety of different distributional models for hierarchical clustering. We describe a general methodology for extending these to incorporate user-specified constraints. For concreteness, we focus on the Dirichlet diffusion tree~\citep{N03}, which has enjoyed empirical success. We show that triplet constraints are quite easily accommodated: when using a Metropolis-Hastings sampler, they can efficiently be enforced, and the state space remains strongly connected, assuring convergence to the unique stationary distribution.

\subsubsection*{The querying strategy}

What is a good way to select the subsets $S$? A simple option is to pick them at random from $X$. We show that this strategy leads to convergence to the target tree $T^*$. Along the way, we define a suitable distance function for measuring how close $T$ is to $T^*$. 


We might hope, however, that a more careful choice of $S$ would lead to faster convergence, in much the same way that intelligent querying is often superior to random querying in active learning. In order to do this, we show how the Bayesian framework allows us to quantify which portions of the tree are the most uncertain, and thereby to pick $S$ that focuses on these regions.

Querying based on uncertainty sounds promising, but is dangerous because it is heavily influenced by the choice of prior, which is ultimately quite arbitrary. Indeed, if only such queries were used, the interactive learning process could easily converge to the wrong tree. We show how to avoid this situation by interleaving the two types of queries.

Finally, we present a series of experiments that illustrate how a little interaction leads to significantly better hierarchical clusterings.

\subsection{Other related work}

A related problem that has been studied in more detail~\citep{ZB00,EDSN11,KBXS12} is that of building a hierarchical clustering where the only information available is pairwise similarities between points, but these are initially hidden and must be individually queried.

In another variant of interactive flat clustering~\citep{BB08,AZ10,ABV14}, the user is allowed to specify that individual clusters be merged or split. A succession of such operations can always lead to a target clustering, and a question of interest is how quickly this convergence can be achieved.

Finally, it is worth mentioning the use of triplet constraints in learning other structures, such as Euclidean embeddings~\citep{BG05}.

\section{Bayesian hierarchical clustering}

The most basic form of hierarchical clustering is a rooted binary tree with the data points at its leaves. This is sometimes called a {\it cladogram}. Very often, however, the tree is adorned with additional information, for instance:
\begin{enumerate}
\item An ordering of the internal nodes, where the root is assigned the lowest number and each node has a higher number than its parent.

This ordering uniquely specifies the induced $k$-clustering (for any $k$): just remove the $k-1$ lowest-numbered nodes and take the clusters to be the leaf-sets of the $k$ resulting subtrees.

\item Lengths on the edges.

Intuitively, these lengths correspond to the amount of change (for instance, time elapsed) along the corresponding edges. They induce a {\it tree metric} on the nodes, and often, the leaves are required to be at the same distance from the root.

\item Parameters at internal nodes.

These parameters are sometimes from the same space as the data, representing intermediate values on the way from the root to the leaves.
\end{enumerate}
Many generative processes for trees end up producing these more sophisticated structures, with the understanding that undesired additional information can simply be discarded at the very end. We now review some well-known distributions over trees and over hierarchical clusterings.

Let's start with cladograms on $n$ leaves. The simplest distribution over these is the uniform. Another well-studied option is the {\it Yule model}, which can be described using either a top-down or bottom-up generative process. The top-down view corresponds to a continuous-time pure birth process: start with one lineage; each lineage persists for a random exponential(1) amount of time and then splits into two lineages; this goes on until there are $n$ lineages. The bottom-up view is a coalescing process: start with $n$ points; pick a random pair of them to merge; then repeat. \citet{A95} has defined a one-parameter family of distributions over cladograms, called the {\it beta-splitting model}, that includes the uniform and the Yule model as special cases. It is a top-down generative process in which, roughly, each split is made by sampling from a Beta distribution to decide how many points go on each side. To move to arbitrary (not necessarily binary) splits, a suitable generalization is the Gibbs fragmentation tree~\citep{MPW08}.

In this paper, we will work with joint distributions over both tree structure and data. These are typically inspired by, or based directly upon, the simpler tree-only distributions described above. Our primary focus is the Dirichlet diffusion tree~\citep{N03}, which is specified by a birth process that we will shortly describe. However, our methodology applies quite generally. Other notable Bayesian approaches to hierarchical clustering include: \citet{W00}, in which each node of the tree is annotated with a vector that is sampled from a Gaussian centered at its parent's vector; \citet{HG05}, that defines a distribution over flat clusterings and then specifies an agglomerative scheme for finding a good partition with respect to this distribution; \citet{AGJ08}, in which data points are allowed to reside at internal nodes of the tree; \citet{TDR08,BW12}, in which the distribution over trees is specified by a bottom-up coalescing process; and \citet{KG15}, which generalizes the Dirichlet diffusion trees to allow non-binary splits. 

\subsection{The Dirichlet diffusion tree}

The Dirichlet diffusion tree (DDT) is a 
generative model for $d$-dimensional vectors
$x_1, x_2, \ldots, x_N$. Data
are generated sequentially via a continuous-time
process, lasting from time $t = 0$ to $t = 1$,
whereupon they reach their final value.

The first point, $x_1$, is generated
via a Brownian motion, beginning at the origin, i.e.
$X_1(t + dt) = X_1(t) + \mathcal{N}(0, \sigma^2I_ddt)$
where $X_1(t)$ represents the value of $x_1$ at time $t$.
The next point, $x_2$, follows the path created
by $x_1$ until
it eventually \emph{diverges}
at a random time, according to a specified
acquisition function $a(t)$.
When $x_2$ diverges, it creates
an internal node in the tree structure
which contains both the time and value of $x_2$
when it diverged.
After divergence, it continues until $t = 1$ with 
an independent Brownian motion.
In general, the $i$-th point
follows the path created by the 
previous $i - 1$ points.
When it reaches a node, it
will first sample one of two branches
to enter, then
either 1) diverge on the branch,
whereupon a divergence time is sampled
according to the acquisition function $a(t)$,
or 2) recursively continue to the next node.
Each of these choices has a probability
associated with it, according to various properties
of the tree structure and choice of acquisition function
(details can be found in \citet{N03}).
Eventually, all points will diverge and continue independently,
creating an internal node storing the time and
intermediate value for each point at divergence.
The DDT thus defines a binary tree over the data
(see \autoref{fig:ddt} for an example).
Furthermore, given a DDT with $N$ points,
it is possible to sample the possible divergence
locations of a $(N + 1)$-th point, using
the generative process. 

\begin{figure}[h]
    \centering
    \includegraphics[width=0.45\textwidth]{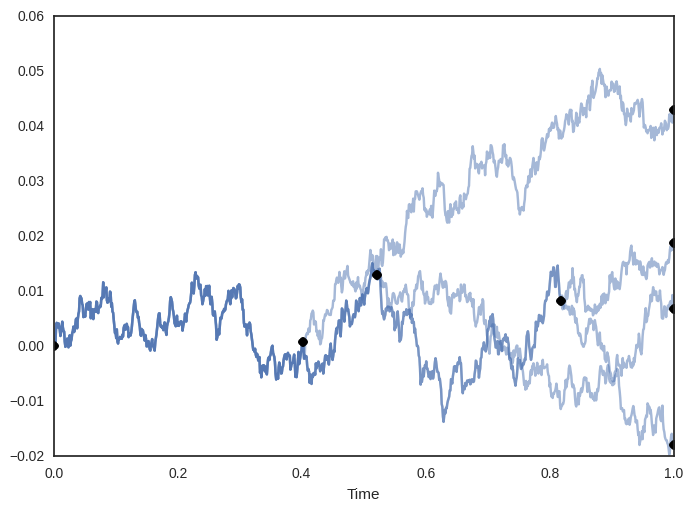}
    \caption{An example DDT with 1-dimensional data. Blue lines represent paths
    taken by each data point, and black dots represent nodes of the tree.
    The rightmost dots are leaves and the others are nodes
    created when points diverged. When drawing the hierarchy,
    typically the top stem is omitted.}
    \label{fig:ddt}
\end{figure}


Sampling the posterior
DDT given data can be done with the
Metropolis-Hastings (MH) algorithm,
an MCMC method.
The MH algorithm 
obtains samples from target distribution $p(x)$
indirectly by instead sampling
from a conditional ``proposal'' distribution $q(x | x')$,
creating a Markov chain whose stationary
distribution is $p(x)$, assuming $q$ satisfies some conditions.
Our choice of proposal distribution
modifies the DDT's tree structure
via a \emph{subtree-prune and regraft} (SPR) move,
which has the added benefit of extending to
other distributions over hierarchies.

\subsubsection*{The Subtree-Prune and Regraft Move}
\label{app:sprsampler}
An SPR move consists of first a \emph{prune} then a \emph{regraft}.
Suppose $T$ is a binary tree with $n$ leaves.
Let $s$ be a non-root node in $T$ 
selected uniformly at random and
$S$ be its corresponding subtree.
To prune $S$ from $T$, we 
remove $s$'s parent $p$
from $T$, and replace $p$ with $s$'s sibling.

Regrafting selects a branch at random
and attaches $S$ to it as follows. 
Let $(u, v)$ be the chosen branch ($u$ is the parent of $v$).
$S$ is attached to the branch by creating a node
$p$ with children $s$ and $v$ and parent $u$ 
(see \autoref{fig:sprmove} for an example).

\begin{figure*}[htp!]
    \centering
    \includegraphics[width=\textwidth]{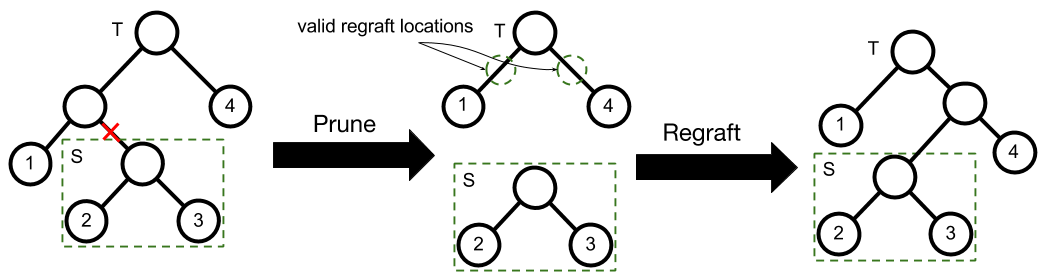}
    \caption{In the subtree-prune and regraft (SPR) move, a subtree $S$
        is selected uniformly at random and is then pruned from the tree. 
            Next, a regraft location is selected from the valid regraft locations, and $S$ is re-attached
            at that location.}
    \label{fig:sprmove}
\end{figure*}

The MH proposal distribution for the DDT
is an augmented SPR move, where
the time and intermediate value at each
node are sampled in addition to tree structure.
The exchangeability of the DDT enables
efficient sampling of regraft branches by 
simulating the generation process for a new point
and returning the branch and time where it diverges.
The intermediate values for the entire
tree are sampled via an interleaved Gibbs sampling move,
as all conditional distributions are Gaussian.

\section{Adding interaction}

Impressive as the Dirichlet diffusion tree is, there is no reason to suppose that it will magically find a tree that suits the user's needs. But a little interaction can be helpful in improving the outcome.

Let $T^*$ denote the target hierarchical clustering. It is not necessarily the case that the user would be able to write this down explicitly, but this is the tree that captures the distinctions he/she is able to make, or wants to make. Figure~\ref{fig:refinement} (left) shows an example, for a small data set of 5 points. In this case, the user does not wish to distinguish between points $1,2,3$, but does wish to place them in a cluster that excludes point $4$.

We could posit our goal as exactly recovering $T^*$. But in many cases, it is good enough to find a tree that captures all the distinctions within $T^*$ but also possibly has some extraneous distinctions, as in the right-hand side of Figure~\ref{fig:refinement}.

\begin{figure}
    \centering
    \includegraphics[width=3in]{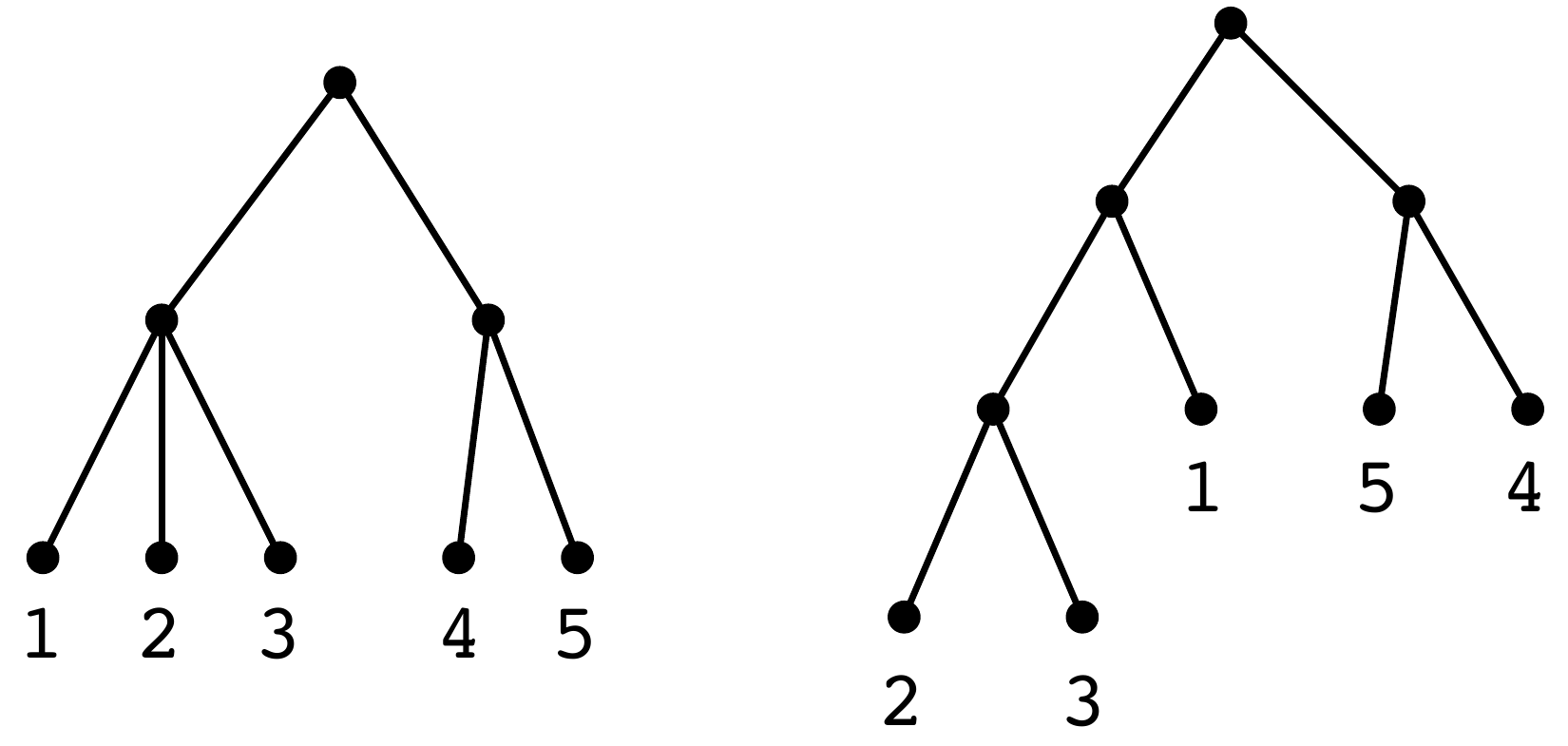}
    \caption{Target tree $T^*$ (left) and a refinement of it.}
    \label{fig:refinement}
\end{figure}

Formally, given data set $X$, we say $S \subseteq X$ is a {\it cluster} of tree $T$ if there is some node of $T$ whose descendant leaves are exactly $S$. We say $T$ is a {\it refinement} of $T^*$ if they have the same set of leaves, and moreover every cluster of $T^*$ is also a cluster of $T$. This, then, is our goal: to find a refinement of the target clustering $T^*$.

\subsection{Triplets}

The user provides feedback in the form of triplets. The constraint $(\{a,b\},c)$ means that the tree should have a cluster containing $a$ and $b$ but not $c$. Put differently, the lowest common ancestor of $a,b$ should be a strict descendant of the lowest common ancestor of $a,b,c$.

Let $\Delta(T)$ denote the set of all proper triplet constraints embodied in tree $T$. If $T$ has $n$ nodes, then $|\Delta(T)| \leq {n \choose 3}$. For non-binary trees, it will be smaller than this number. Figure~\ref{fig:refinement} (left), for instance, has no triplet involving $1,2,3$. 

Refinement can be characterized in terms of triplets.
\begin{restatable}{lemma}{treerefinement}
\label{thm:treerefinement}
Tree $T$ is a refinement of tree $T'$ if and only if $\Delta(T') \subseteq \Delta(T)$.
\end{restatable}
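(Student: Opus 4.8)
The plan is to prove the two directions separately, using throughout the elementary reformulation that a triplet $(\{a,b\},c)$ lies in $\Delta(T)$ exactly when $T$ has a cluster containing $a$ and $b$ but not $c$ --- equivalently, when the smallest cluster of $T$ containing both $a$ and $b$ excludes $c$. I take as a standing assumption that $T$ and $T'$ have the same leaf set $X$, since this is built into the definition of refinement and is what makes $\Delta(T')$ and $\Delta(T)$ comparable. The forward direction is then immediate: assuming $T$ refines $T'$, any $(\{a,b\},c) \in \Delta(T')$ is witnessed by some cluster $S$ of $T'$ with $a,b \in S$ and $c \notin S$; because every cluster of $T'$ is also a cluster of $T$, the same $S$ witnesses $(\{a,b\},c) \in \Delta(T)$, giving $\Delta(T') \subseteq \Delta(T)$.

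For the converse I assume $\Delta(T') \subseteq \Delta(T)$ and must show every cluster $S$ of $T'$ is a cluster of $T$. Singletons and the full set $X$ are clusters of any tree on $X$, so the interesting case is $2 \le |S| \le |X| - 1$. Let $r = \mathrm{lca}_T(S)$ be the lowest common ancestor in $T$ of the leaves in $S$, and let $S_T$ be its set of descendant leaves, i.e.\ the smallest cluster of $T$ that contains $S$. The goal is to show $S_T = S$, which exhibits $S$ as a cluster of $T$. Since $S \subseteq S_T$ always holds, I suppose for contradiction that there is some $c \in S_T \setminus S$.

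The heart of the argument is choosing good witnesses inside $S$. Because $r$ is the lowest common ancestor of $S$, the leaves of $S$ do not all lie in a single child subtree of $r$, so I can pick $a, b \in S$ in two distinct child subtrees; then $\mathrm{lca}_T(a,b) = r$. Now every cluster of $T$ containing both $a$ and $b$ is the descendant-leaf set of a node lying on or above $r$, and hence contains all of $S_T$, in particular $c$. Therefore $T$ has no cluster containing $a,b$ but not $c$, so $(\{a,b\},c) \notin \Delta(T)$. But $a,b \in S$ and $c \notin S$ with $S$ a cluster of $T'$, so $(\{a,b\},c) \in \Delta(T')$, contradicting the hypothesis. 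Hence $S_T = S$ and $S$ is a cluster of $T$.

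The main obstacle is this witness-selection step: I must argue that the lowest common ancestor $r$ of $S$ always admits two members of $S$ in different child subtrees, so that $\mathrm{lca}_T(a,b)$ equals $r$ exactly and any $(a,b)$-containing cluster is forced all the way up to $S_T$. Phrasing it via ``distinct child subtrees'' rather than ``both children'' is what lets the argument go through for non-binary trees as well. Everything else rests on the simple fact that the clusters of $T$ containing a fixed leaf form a chain under inclusion, so a cluster contains two leaves if and only if it sits on or above their lowest common ancestor.
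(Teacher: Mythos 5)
Your proof is correct and follows essentially the same route as the paper's: the forward direction transfers the witnessing cluster directly, and the converse uses the triplets with two points inside the cluster and one outside to force $\mathrm{lca}_T(S)$ to have leaf set exactly $S$. You merely spell out the witness-selection step (choosing $a,b$ in distinct child subtrees of the lca) and the trivial cases, which the paper's proof leaves implicit.
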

\begin{proof}
See supplement.
\end{proof}
In particular, {\it any triplet-querying scheme that converges to the full set of triplets of the target tree $T^*$ is also guaranteed to produce trees that converge to a refinement of $T^*$.}

With this lemma in mind, it is natural to measure how close a tree $T$ is to the target $T^*$ with the following (asymmetric) distance function, which we call {\it triplet distance} (TD):






\begin{align}
    \mathrm{TD}(T^*, T) = \frac{\sum_{c \in \Delta(T^*)} \mathbb{I}(c \notin \Delta(T)) }{|\Delta(T^*)|}
\end{align}
where $\mathbb{I}$ is the indicator function.
This distance is zero exactly when $T$ is a refinement of $T^*$, in which case we have reached our goal. 



A simple strategy for obtaining triplets 
would be to present the user with three randomly
chosen data points and have the user pick the odd one out.
This strategy has several drawbacks.
First, some sets of three points have no triplet constraint 
(for instance, points $1,2,3$ in \autoref{fig:refinement}).
Second, the chosen set of points might correspond to a triplet
that has already been specified, or is {\it implied} by 
specified triplets. For example, knowledge of $(\{a, b\}, c)$ and
$(\{b, c\}, d)$ implies $(\{a, c\}, d)$.
Enumerating the set of implied
triplets is non-trivial for $n > 3$ triplets~\citep{Bryant1995},
making it difficult to avoid these implied triplets
in the first place.

We thus consider another strategy---rather
than the user arranging three data points into
a triplet, the user observes
the hierarchy induced over some $O(1)$-sized
subset $S$ of the data and corrects
an error in the tree by supplying a triplet.
We call this is a \emph{subtree query}.
Finally, we note that in this work
we only consider the \emph{realizable} case
where the triplets obtained
from a user do not contain contradictory information
and that there is a tree that satisfies
all of them.

%


\subsection{Finding a tree consistent with constraints}
\label{sec:aho}
We start with a randomly initialized hierarchy $T$
over our data
and show an induced subtree $T|_S$ to the user, obtaining the
first triplet. The next step is constructing  
a new tree that satisfies the triplet.
This begins the feedback cycle; a user provides a triplet
given a subtree and the triplet is incorporated into a clustering algorithm,
producing a new candidate tree.
A starting point is 
an algorithm that returns 
a tree consistent with a set of triplets.

The simplest algorithm to solve this problem is
the \texttt{BUILD} algorithm, introduced in \citet{ASSU81}.
Given a set of triplets $C$, \texttt{BUILD}
will either return a tree that satisfies $C$, or error 
if no such tree exists.
In \texttt{BUILD}, we first construct the {\it Aho graph}
$G_C$, which has a vertex for each data point and an 
undirected edge $\{a,b\}$ for each triplet constraint $(\{a,b\},c)$.
If $G_C$ is connected, there is no tree that satisfies all
triplets. Otherwise, the top split of the tree is a partition of 
the connected components of $G_C$: any split is fine as long as 
points in the same component stay together. Satisfied triplets are discarded, and \texttt{BUILD} then continues recursively on the
left and right subtrees.

\texttt{BUILD} satisfies triplet constraints but ignores 
the geometry of the data, whereas we wish to take both into
account. By incorporating triplets into the posterior DDT 
sampler, we obtain high likelihood trees that still satisfy $C$.

\subsection{Incorporating triplets into the sampler}

In this section, we present an algorithm
to sample candidate trees 
from the posterior DDT, constrained
by a triplet set $C$.
It is based on the subtree prune and regraft move.

\begin{figure*}[htp!]
    \centering
    \includegraphics[width=\textwidth]{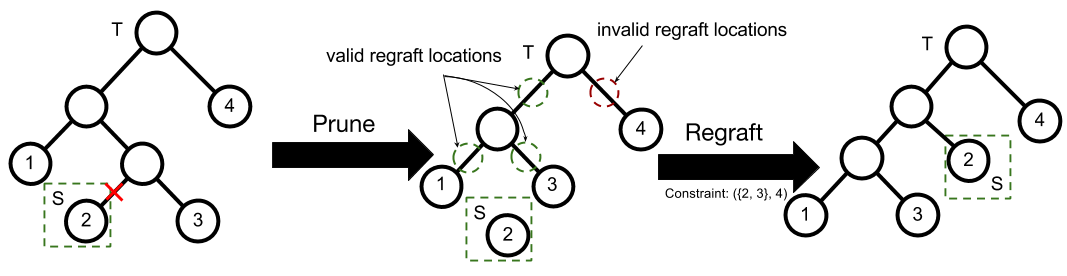}
    \caption{
            Visualized is a constrained-SPR move.
            Pruning is identical  but
            a regraft location is selected from the valid regraft locations
            limited by triplets.
            In this image, we are constrained by the sole triplet $(\{2, 3\}, 4)$.
            }
    \label{fig:constrainedsprmove}
\end{figure*}

The SPR move is of particular interest
because we can efficiently
enforce triplets to form a \emph{constrained-SPR move},
resulting in a sampler
that only produces trees that satisfy a set of triplets.
A \emph{constrained-SPR move} is defined as
an SPR move that assigns zero probability to
any resulting trees that would violate a set of triplets.
Restricting the neighborhood of an SPR move
runs the risk of partitioning the state space,
losing the convergence
guarantees of the Metropolis-Hastings algorithm.
Fortunately, a constrained-SPR move does not compromise
strong connectivity.
For any realizable triplet set $C$,
we prove 
the constrained-SPR move Markov chain's aperiodicity and irreducibility.

Consider the Markov chain on the state space 
of rooted binary trees that is induced by the constrained sampler.
\begin{restatable}{lemma}{aperiodic}
The constrained-SPR Markov chain is aperiodic.
\end{restatable}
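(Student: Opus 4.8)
The plan is to exhibit a self-loop at every valid state, which immediately forces the period of every state to be $1$. Recall that a state of the chain is a rooted binary tree $T$ satisfying the constraint set $C$, and that the period of a state is the greatest common divisor of the lengths of all closed walks through it; a single self-loop $P(T,T) > 0$ therefore drives this gcd to $1$. Since aperiodicity is a per-state property, establishing $P(T,T)>0$ for every valid $T$ suffices, independently of whether the chain is irreducible.

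First I would isolate the \emph{identity} constrained-SPR move. Fix a valid tree $T$ and any non-root node $s$ with parent $p$, sibling $w$, and grandparent $u$ (so $p$'s children are $s$ and $w$, and $p$'s parent is $u$). Pruning the subtree $S$ rooted at $s$ deletes $p$ and reconnects $w$ directly to $u$, creating the branch $(u,w)$. Regrafting $S$ onto this very branch $(u,w)$ reintroduces a node with children $s$ and $w$ and parent $u$, which reconstructs $T$ exactly.

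Next I would verify that this move is admissible under the constrained sampler and carries positive probability. The node $s$ is chosen uniformly among the finitely many non-root nodes, hence with positive probability. The branch $(u,w)$ is a genuine candidate regraft location, and because attaching $S$ there yields $T$ itself---which by assumption satisfies every triplet in $C$---the constrained-SPR move assigns it nonzero probability rather than zeroing it out. Finally, since the proposed tree equals the current tree, the Metropolis--Hastings acceptance probability is $1$. Multiplying these positive factors gives $P(T,T) > 0$.

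Concluding, every state of the constrained-SPR chain has a self-loop, so every state has period $1$, and the chain is aperiodic. The only point requiring care is the admissibility check in the previous step: one must confirm that restricting the regraft neighborhood to constraint-respecting locations does not accidentally forbid the regraft-to-origin, and this is immediate precisely because the origin tree $T$ is already a valid state. Equivalently, one could instead source the self-loop from a rejected Metropolis--Hastings proposal, but the identity-move argument avoids any analysis of acceptance ratios.
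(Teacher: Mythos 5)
Your proposal is correct and follows essentially the same route as the paper: both arguments establish aperiodicity by exhibiting a self-loop, observing that after pruning, the regraft-to-original-location move is never forbidden by the constraints because the current tree already satisfies $C$, hence has positive probability. Your version merely spells out the details (the uniform choice of $s$, the identity of the branch $(u,w)$, and the trivial Metropolis--Hastings acceptance) that the paper's shorter proof leaves implicit.
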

\begin{proof}
A sufficient condition for aperiodicity
is the existent of a ``self-loop'' in the transition matrix: a non-zero probability of a state transitioning to itself.
Supposed we have pruned a subtree already.
When regrafting, the ordinary SPR move
has a non-zero probability of choosing any branch,
and a constrained-SPR move cannot regraft
to branches that would violate triplets.
Since the current tree in the Markov chain
satisfies triplet set $C$, there is a non-zero probability
of regrafting to the same location. 
We thus have an aperiodic Markov chain.
\end{proof}

\begin{restatable}{lemma}{irreducible}
\label{thm:irr}
A constrained-SPR Markov chain is irreducible.
\end{restatable}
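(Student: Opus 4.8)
The plan is to prove irreducibility directly: for any two valid trees $T_1, T_2$ (both satisfying the realizable constraint set $C$) I will exhibit a finite sequence of constrained-SPR moves, each of positive probability, that carries $T_1$ to $T_2$. Each step will have positive probability because pruning any chosen node is selected uniformly, and regrafting to a branch is permitted whenever the resulting tree satisfies $C$ --- which will hold at every waypoint by construction. The argument is an induction on the number of leaves $n$. The inductive step has two phases: first I rearrange $T_1$ so that its top split agrees with the top split of the target $T_2$; then, since that split turns each side into a cluster, I recurse independently on the two sides to match $T_2$'s subtrees. The recursion is sound because rearranging the internal structure of a cluster $Y$ never changes the lowest common ancestor of any pair that is not wholly inside $Y$, so every triplet of $C$ that is not internal to $Y$ is automatically preserved; only the triplets with all three points in $Y$ are at stake, and these are exactly the constraints governed by the recursive call on $Y$.

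All the work is therefore concentrated in a single lemma, which I would isolate. \emph{Given a leaf set $Y$ and a bipartition $P \mid Q$ of $Y$ that cuts no edge of the Aho graph $G_C$ restricted to the triplets internal to $Y$, any valid tree on $Y$ can be driven by constrained-SPR moves to a valid tree whose top split is exactly $P \mid Q$.} Applying this lemma top-down --- first at the root to install $T_2$'s top split, then inside each resulting cluster to install the corresponding split of $T_2$, and so on down the tree --- realizes every cluster of $T_2$, and hence produces $T_2$ itself.

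To prove the lemma I would induct on $|Y|$. The key preliminary observation is that the top split $Y_L \mid Y_R$ of \emph{any} valid tree also cuts no Aho edge: if an edge $\{a,b\}$ arising from a triplet $(\{a,b\},c)$ had $a \in Y_L$ and $b \in Y_R$, the lowest common ancestor of $a,b$ would be the root, contradicting the triplet. Now, given the current valid tree with top subtrees on $Y_L$ and $Y_R$, I invoke the inductive hypothesis on each of these smaller subtrees to give them the top splits $(P \cap Y_L)\mid(Q \cap Y_L)$ and $(P \cap Y_R)\mid(Q \cap Y_R)$; these are legal targets because $P \mid Q$ cuts no Aho edge, so neither do its restrictions. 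This leaves four clusters $P_L, Q_L, P_R, Q_R$ arranged as two pairs, whereupon two explicit SPR moves --- prune $P_R$ and regraft it as the sibling of $P_L$, then prune $Q_L$ and regraft it as the sibling of $Q_R$ --- reassemble them into the top split $P \mid Q$.

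The main obstacle is showing that these two block moves, together with the lifted recursive moves, keep every triplet of $C$ satisfied at each intermediate tree, since a constrained-SPR walk admits no illegal waypoint. The resolution rests entirely on the two ``no crossing edge'' facts. Because neither $Y_L \mid Y_R$ nor $P \mid Q$ severs an Aho edge, no triplet $(\{a,b\},c)$ can have its pair $\{a,b\}$ split between $Y_L$ and $Y_R$ or between $P$ and $Q$. Consequently, when I relocate a whole block such as $P_R$, every triplet touching one of its leaves either lies wholly inside that intact block (its internal geometry is untouched) or has its paired leaf on the same destination side, so the move only pulls the pair closer together or pushes the excluded point strictly above them. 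Carrying out this case analysis --- pair inside the moved block, pair on the same side as the block, and the moved leaf acting as the excluded point --- is the crux; once it is in place, the induction and the top-down application go through, and the constrained-SPR chain is irreducible.
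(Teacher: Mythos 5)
Your proposal is correct in substance but takes a genuinely different route from the paper's proof. The paper argues through a hub: it defines a canonical tree $T_C$ (each node split into the minimum-index connected component of the local Aho graph versus everything else), shows that any valid tree can be driven to $T_C$ by a pure-subtree grouping procedure followed by two constrained-SPR moves per node, and then closes the path $T \to T_C \to T'$ with a separate claim that every constrained-SPR move has a positive-probability reverse move. You instead go directly from $T_1$ to $T_2$ by a top-down induction that installs each split of the target tree, resting on the key lemma that any bipartition cutting no Aho edge can be installed by first (inductively) installing its restrictions inside the two current top clusters and then performing two block-level moves; your twin observations that neither the current top split nor the target split severs an Aho edge force the pair of every relevant triplet to lie wholly inside one of the four blocks $P_L, Q_L, P_R, Q_R$, which is exactly what keeps every waypoint valid, and the cluster-lifting argument correctly reduces the recursive calls to triplets internal to the cluster. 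This buys two things the paper's argument does not have: you never need the reversibility claim (you use only, in the forward direction, the assumption---which the paper also needs, precisely for its reverse moves---that the constrained regraft assigns positive probability to every branch yielding a valid tree), and the construction yields an explicit move-count bound of roughly two moves per internal node of $T_2$ plus the recursive work. Two small points to tidy: handle the degenerate cases where $P \cap Y_L$, $P \cap Y_R$, or the analogous $Q$-intersections are empty (one recursive call and one block move then disappear), and state explicitly that since pruning selects a uniformly random non-root node, each prescribed prune has positive probability, so the entire path does. It is worth noting that your two block moves are essentially the same endgame as the paper's two constrained-SPR moves after grouping; the difference is that your recursion on restricted splits replaces the paper's pure-subtree grouping argument and removes the need for a distinguished canonical tree altogether.
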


\begin{proof}
(sketch) To show irreducibility,
we show that a tree $T$ has an non-zero
probability of reaching an arbitrary tree
$T'$ via constrained-SPR moves where
both $T$ and $T'$ satisfy a set of triplets $C$.
Our proof strategy
is to construct a canonical tree 
$T_C$, and show that there exists
a non-zero probability path from $T$ to $T_C$,
and therefore from $T'$ to $T_C$.
We then show that for a given constrained-SPR move,
the reverse move has a non-zero probability.
Thus, there exists a path from $T$ to $T_C$ to
$T'$, satisfying irreducibility.

Recall that the split at a node in a binary
tree that satisfies triplet set $C$
corresponds to a binary partition
of the Aho graph at the node (see Section \ref{sec:aho}).
$T_C$ is a tree such that
every node in $T_C$ is in \emph{canonical form}.
A node is in canonical form if it is a leaf node,
or, the partition of the Aho graph
at that node can be written as $(l, r)$.
$l$ is the single connected component
containing the point with the minimum data index,
and $r$ is the rest of the components.

To convert a particular node $s$ into canonical form,
we first perform ``grouping'',
which puts $l$ into a single descendant of $s$
via constrained-SPR moves.
We then make two constrained-SPR moves to convert
the partition at $s$ into the form $(l, r)$ (see \autoref{fig:canonical}).
We convert all nodes into canonical form recursively, turning 
an arbitrary tree $T$ into $T_C$.

Finally, the reverse constrained-SPR move has a non-zero
probability. Suppose
we perform a constrained-SPR move on tree $T_1$, converting it into $T_2$ by 
detaching subtree $s$
and attaching it to branch $(u, v)$.
A constrained-SPR move on $T_2$ can select
$s$ for pruning 
and can regraft it to form $T_1$ with a non-zero
probability since
$T_1$ satisfies the same constraint set as $C$.
For a full proof, please refer to the supplement.
\end{proof}

\begin{figure*}
    \centering
    \includegraphics[width=\textwidth]{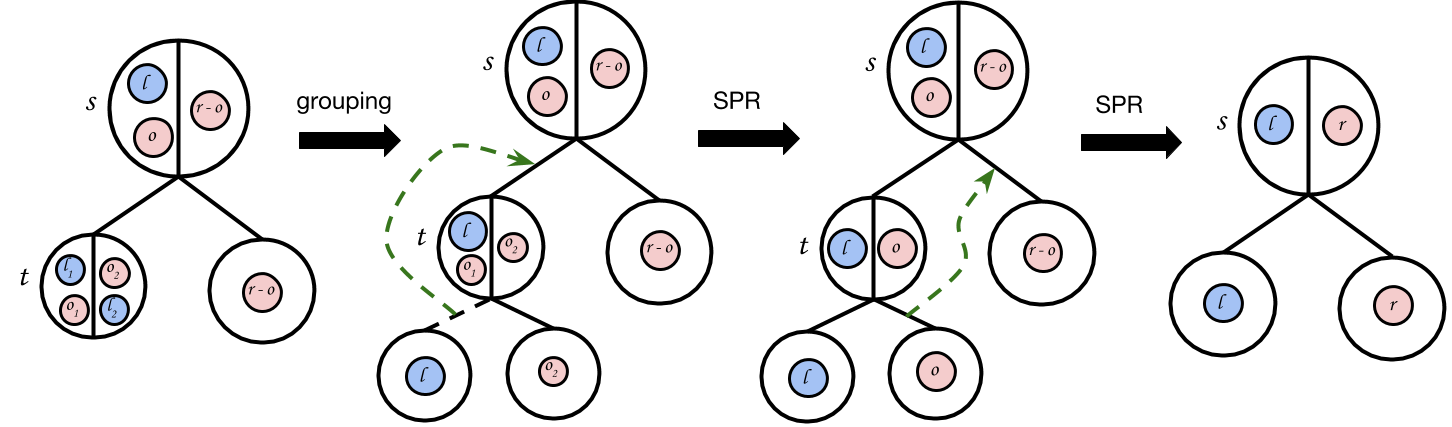}
    \caption{The process of converting $s$ into canonical form.
    We first group nodes from $l$ into their own isolated subtree, then perform
    two constrained-SPR moves to put $s$ into canonical form.}
    \label{fig:canonical}
\end{figure*}

%

The simplest possible scheme for a constrained-SPR move 
would be rejection sampling. The Metropolis-Hastings
algorithm for the DDT would be the same as in the
unconstrained case,
but any trees violating $C$ would have accept
probability $0$. Although this procedure is correct,
it is impractical. As the number of triplets
grows larger, more trees will be rejected
and the sampler will slow down over time.

To efficiently sample a tree that satisfies a set of triplets $C$, 
we modify the regraft in the ordinary SPR move. 
The constrained-SPR move must assign zero probability
to any regraft branches that would result in a tree
that violates $C$.
This is accomplished by generating the path from the root
in the same manner as sampling a branch,
but avoiding paths that would resulted in violated triplets.

\subsubsection*{Description of constrained-SPR sampler}

Recall that in the DDT's sampling procedure for regraft branches,
a particle at a node picks a branch, and either diverges from that
branch or recursively samples the node's child.
Let $s$ be the root of the subtree we are currently grafting 
back onto tree $T$, let $C$ be the
triplet set, and let $\texttt{leaves}(u)$
denote the descendant-leaves of node $u$.
Suppose we are are currently at node $u$,
deciding whether to diverge
at the branch $(u, v)$ or to recursively sample $v$.
Consider any triplet $(\{a, b\}, c) \in C$. If all---or none---of $a,b,c$ are in $\texttt{leaves}(s)$, then the triplet is 
unaffected by the graft, and can be ignored. Otherwise, 
some checks are needed:
\begin{enumerate}
\item $c \in \texttt{leaves}(s)$

Then we know $a,b \not\in \texttt{leaves}(s)$. If $a$ and $b$ are split across $v$'s children,
we are banned from sampling $v$.

\item $a \in \texttt{leaves}(s)$ but $b,c \not\in \texttt{leaves}(s)$

If both $b$ and $c$ are in $\texttt{leaves}(v)$, we are
required to sample $v$.
If just $c$ is in $\texttt{leaves}(v)$, we are banned
from both diverging at $(u, v)$ and sampling $v$.
Otherwise we can either diverge at $(u, v)$ or sample $v$.

\end{enumerate}

(The case where $b \in \texttt{leaves}(s)$ is
symmetric to case 2.)
If we choose to sample $v$, we remove
constraints from our current set $C$ that are now satisfied,
and continue recursively.
This defines a procedure by which we can sample
a divergence branch that does not violate constraints.

While the constrained-SPR sampler can produce
a set of trees given a set of static constraints,
the \texttt{BUILD} algorithm is useful in
adding new triplets into the sampler.
Suppose we have been sampling trees with constrained-SPR moves
with satisfying triplet set $C$ 
and we obtain a new triplet $u = (\{a, b\}, c)$ from a user query. 
We take the current tree $T$ and find the least common ancestor 
(call it $z$) of $a$ and $b$. We then call 
\texttt{BUILD}$(C + \{u\})$ on just the nodes in 
$\texttt{leaves}(z)$, and 
we substitute the resulting subtree at position $z$ in tree $T$.

\subsection{Intelligent subset queries}
We now have a method to sample a constrained
distribution over candidate trees.
Given a particular candidate tree $T$,
our first strategy for subtree querying
is to pick a random subset $S$ of the leaves
of constant size, and show the user
the induced subtree over the subset, $T|_S$.
We call this \emph{random subtree querying}.
But can we use a set of trees produced by the sampler
to make better subtree queries?
If tree structure is ambiguous in a particular region of data,
i.e. there are several hierarchies that could explain
a particular configuration of data, 
the MH algorithm will sample over
these different configurations. A query over points
in these ambiguous regions may help our algorithm
converge to a better tree faster. By looking for
these regions in our samples, we can choose query
subsets $S$ for which the tree structure is highly variable, 
and hopefully the resulting triplet from the user will 
reduce the ambiguity.

More precisely, we desire a notion of tree variance.
Given a set of trees $\mathcal{T}$, what is the variance
over a given subset of the data $S$?
We propose using the notion of tree distance
as a starting point. 
For a given tree $T$, the tree distance between two nodes $a$ and $b$,
denoted $\texttt{treedist}_T(a, b)$,
is the number of edges of $T$ needed to get from $a$ to $b$.
Consider two leaves $u$ and $v$. If the tree structure around
them is static, we expect the tree distance
between $u$ and $v$ to change very little, as the surrounding tree
will not change. However, if there is
ambiguity in the surrounding structure, the tree distance will
be more variable.
Given a subset of data $S$ and a set of trees $\mathcal{T}$,
the tree distance variance (TDV) of the trees over the subset is defined as:
\begin{align}
    \text{TDV}(\mathcal{T}, S) = \max_{u, v \in S}\mathrm{Var}_{T \in \mathcal{T}}[\texttt{treedist}_{T|_S}(u, v)]
\end{align}

This measure of variance is
the \emph{max} of the variance of tree distance between
any two points in the subset. Computing this requires
$O(|\mathcal{T}||S|^2 \log|S|)$ time, and since since $|S|$ is constant,
it is not prohibitively expensive.

Given a set of trees from the sampler $\mathcal{T}$,
we now select a high-variance subtree
by instantiating $L$ random subsets of constant size, $S_1, \ldots, S_L$
and picking $\text{argmax}_l \text{TDV}(\mathcal{T}, S_l)$.
We call this \emph{active subtree querying}.
Although using tree variance will help reconcile ambiguity in the tree structure,
if a set of samples from a tree all violate the same triplet,
it is unlikely that active querying will recover that triplet.
Thus, interleaving random querying and active querying
will hopefully help the algorithm converge quickly, while avoiding
local optima.

\section{Experiments}

\begin{figure*}
    \begin{subfigure}[]{0.5\textwidth}
        \centering
        \includegraphics[width=\textwidth]{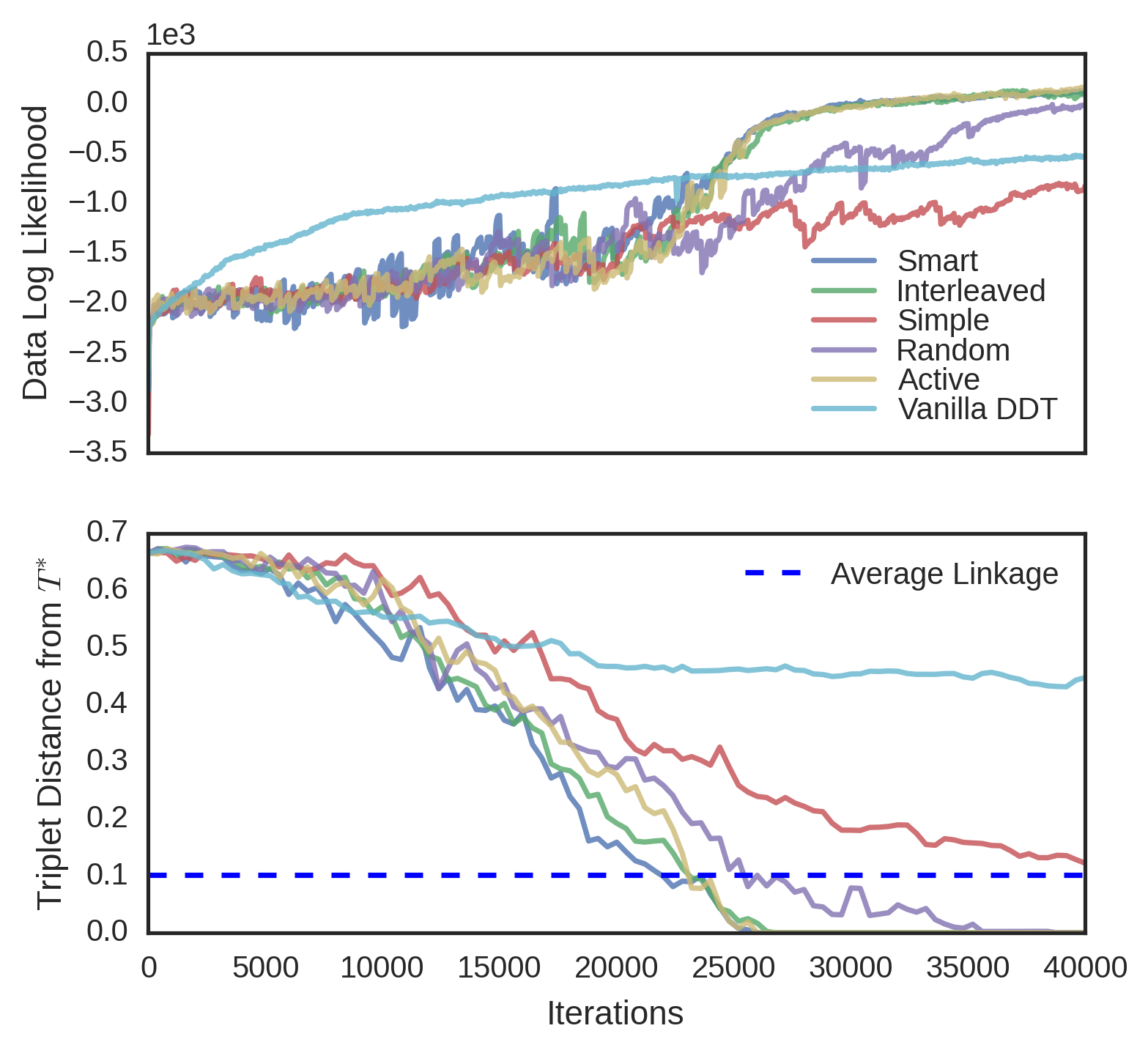}
        \caption{Fisher Iris}
        \label{fig:iris-result}
    \end{subfigure}
    \begin{subfigure}[]{0.5\textwidth}
        \centering
        \includegraphics[width=\textwidth]{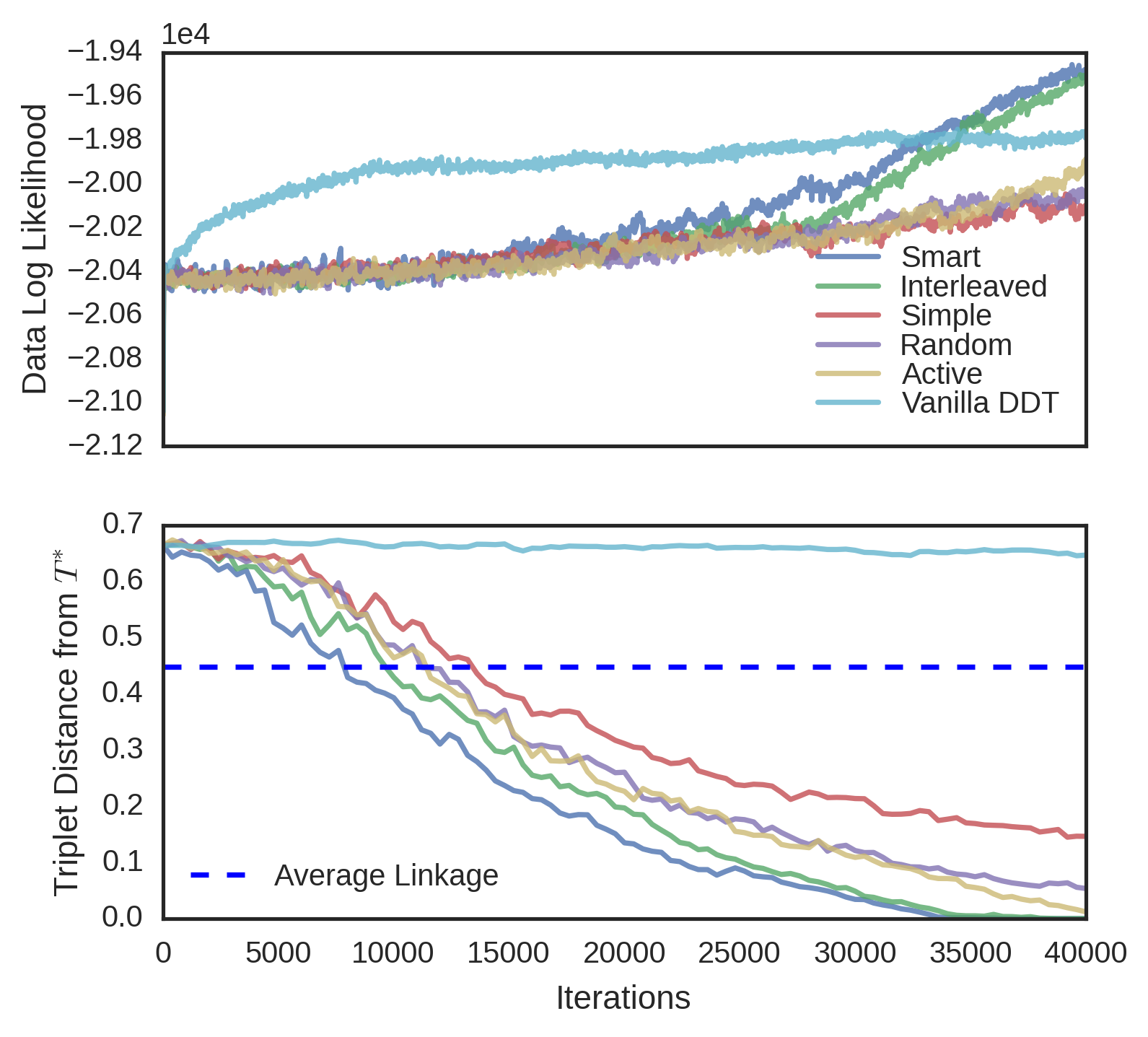}
        \caption{MNIST}
        \label{fig:mnist-result}
    \end{subfigure}
    \caption{The average of four runs of constrained-SPR samplers
    for the Fisher Iris dataset and the MNIST dataset, using 5 different querying schemes. A query was made every 100 iterations.}
    \label{fig:main-results}
\end{figure*}

We evaluated the convergence properties of five different querying schemes.
In a ``simple query'', a user is presented with three random data
and picks an odd one out.
In a ``smart query'', a user is unrealistically shown the entire candidate tree and reports a violated triplet.
In a ``random query'', the user is shown the induced candidate tree over
a random subset of the data.
In an ``active query'', the user is shown a high variance 
subtree using tree-distance variance.
Finally, in an ``interleaved query'', the user is alternatively shown a random
subtree and a high variance subtree.
In each experiment, $T^*$ was known, so user queries were simulated
by picking a triplet
violated by the root split of the queried tree, 
and if no such triplet existed, recursing on a child.
Each scheme was evaluated on four different datasets.
The first dataset, MNIST \citep{MNIST}, is an 10-way image classification
dataset where the data are 28 x 28 images of digits.
The target tree $T^*$ is simply the $K$-way classification 
tree over the data.
The second dataset is Fisher Iris, a 3-way flower
classification problem, where each of 150 flowers has
five features.
The third dataset, Zoo ~\citep{Lichman2013}, is a set of 93 animals
and 15 binary morphological features for each of animals,
the target tree being the induced binary tree 
from the Open Tree of Life ~\citep{Hinchliff2015}.
The fourth dataset is 20 Newsgroups ~\citep{20News}, a
corpus of text articles on 20 different subjects. We use
the first 10 principal components as features
in this classification problem.
All datasets were modeled with DDT's with
acquisition function $a(t) = 1/(1 - t)$
and Brownian motion parameter $\sigma^2$ estimated from data.
To better visualize the different convergence rates
of the querying schemes, MNIST and  20 Newsgroups were subsampled
to 150 random points.

For each dataset and querying scheme, we instantiated a SPR
sampler with no constraints. Every one hundred iterations of the 
sampler, we performed a query.
In subtree queries, we used subsets of size $|S| = 10$ 
and in active querying, the highest-variance subset was chosen from $L = 20$ different random subsets.
As baselines,
we measured the triplet distance of the vanilla DDT
and the average linkage tree.
Finally, results were averaged over four runs of each sampler.
The triplet distances for Fisher Iris and MNIST can be seen in \autoref{fig:main-results}. Results for the other datasets
can be found in the supplement.
Although unrealistic due to the size of the tree shown to the user, 
the smart query performed the best, achieving minimum error
with the least amount of queries. Interleaved followed next,
followed by active, random, and simple. In general, the vanilla
DDT performed the worst, and the average linkage
score varied on each dataset, but in all cases, the
subtree querying schemes performed better than both the vanilla DDT
and average linkage.

In three datasets (MNIST, Fisher Iris and Zoo), 
interactive methods
achieve higher data likelihood than the vanilla DDT.
Initially, the sampler is often restructuring the tree
with new triplets
and data likelihood is unlikely to rise. However, over time
as less triplets are reported,
the data likelihood increases rapidly.
We thus conjecture that triplet constraints 
may help the MH algorithm find better optima.

\section{Future Work}
We are interested in studying the non-realizable case, i.e.
when there does not exist a tree that satisfies triplet set $C$. We would also like to better understand the effect of constraints on searching
for optima using MCMC methods.

\newpage


\bibliography{paper}
\bibliographystyle{icml2016}

\appendix
\beginsupplement
\onecolumn

\section{Proof Details}
\label{sec:proof}

\treerefinement*
\begin{proof}
Suppose, first, that $T$ is a refinement of $T'$. Pick any triplet $(\{a,b\},c) \in \Delta(T')$. Then there is a node in $T'$ whose descendants include $a,b$ but not $c$. By the definition of refinement, $T$ contains a node with the same descendants. Hence the constraint $(\{a,b\},c)$ holds for $T$ as well.

Conversely, say $\Delta(T') \subseteq \Delta(T)$. Pick any cluster $S'$ of $T'$; it consists of the descendants of some node in $T'$. Consider the set of all triplet constraints consisting of two nodes of $S'$ and one node outside $S'$. Since these constraints also hold for $T$, it follows that the lowest common ancestor of $S'$ in $T$ must have exactly $S'$ as its set of descendants. Thus $S'$ is also a cluster of $T$.
\end{proof}

\irreducible*
\begin{proof}
To prove irreducibility, we show that there is a non-zero probability
of moving from state $T$ to $T'$, both of which satisfy $C$. We
accomplish this by first defining a \emph{canonical tree} $T_C$ given a triplet 
set $C$ and showing that we can reach $T_C$ from $T$ using  
constrained-SPR moves. We then show that for every constrained-SPR move,
there exists an equivalent reverse move that undoes it with non-zero
probability.
This proves that that from $T_C$ we can reach $T'$, 
creating a path from $T$ to $T_C$ to $T'$.

A binary tree $T$ can be entirely defined by the bipartitions
made over the data at each node. 
Let $G_n$ be the Aho graph for node $n$.
For a binary tree that satisfies a set of triplets, 
the split over the data at each node $n$ must 
be a bipartition of the connected components of $G_n$.
We define a particular node to be in \emph{canonical form}
if either a) it is a leaf, or b) the bipartition over $G_n$
at that node can be written as
$(l, r)$, where $l$ exactly matches a single, particular connected component of
$G_n$, and $r$ is the rest of the connected components. 
The particular component
$l$ is the connected component in $G_n$ with the minimum data index
inside it.
Note that we treat the children of nodes as unordered.
A canonical tree $T_C$ is one such that every node in the tree is
in canonical form.
To convert an arbitrary tree $T$ that satisfies $C$ into $T_C$, we first
convert the root node of $T$ into canonical form
using constrained-SPR moves.

Let $s$ be the root of $T$ and let $l$
be the set of points that ought to be in their own partition according
to $G_s$. In order for
$s$ not to be in canonical form, $l$ must be in a partition with 
data from other connected components in $G_s$, which we will call $o$.
The bipartition if $s$ were in canonical form would be $(l, r)$ and
the current non-canonical bipartition can thus be written as $(l + o, r - o)$.

We first examine $t$, the child of the root that contains $l + o$.
In general, the data from $l$ and the data from $o$ could
be split over the children of $t$, so the partition at $t$
can be written as $(l_1 + o_1, l_2 + o_2)$ where $l = l_1 + l_2$ 
and $o = o_1 + o_2$. This is visualized in the first
tree of \autoref{fig:canonical}.
We first group the data from $l$ into their own ``pure'' subtree
of $t$ as follows.
Let $u$ be the root of the
lowest non-pure subtree of $t$ that has
data from $l$ in both of its children.
There exist two subtrees that are descendants of $u$ 
that contain data from $l$ (one on the left and one on the right).
Those two subtrees
must be pure, and furthermore, they are both free to
move within $u$ via constrained-SPR moves because
they are in different connected components in $G_u$.
Thus, we can perform a constrained-SPR move to merge these
two pure subtrees together into a larger pure subtree.
We can repeat this process for $t$ until all nodes
from $l$ are in their own pure subtree of $t$.
The partition of $t$ can thus be written as
$(l + o_1, o_2)$, since the pure subtree may be several
levels down from $t$. This grouping process is visualized in
\autoref{fig:canonicalgrouping} and the results can be
seen in the second
tree in \autoref{fig:canonical}.

\begin{figure}
    \centering
    \includegraphics[width=0.5\textwidth]{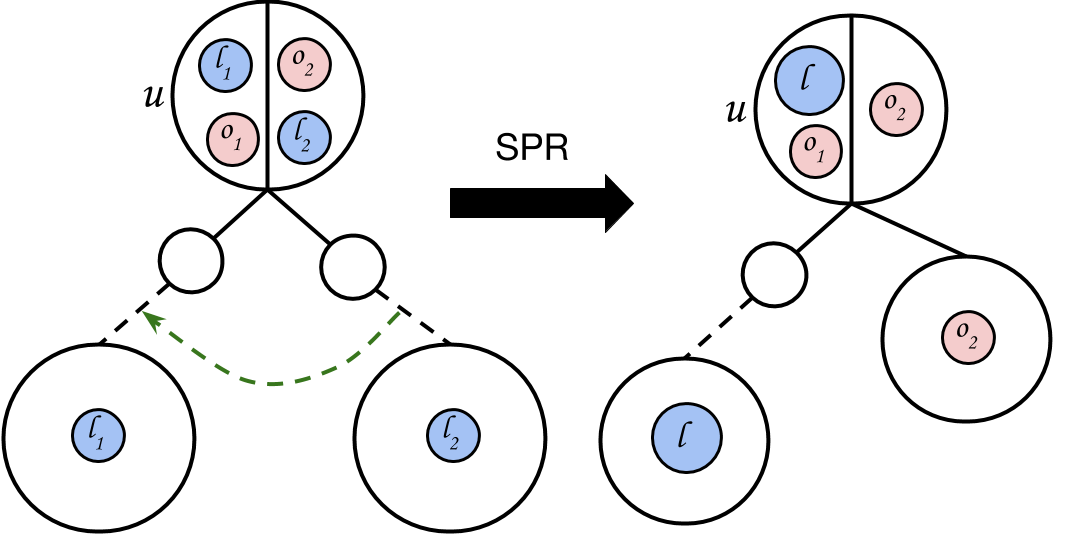}
    \caption{The process of grouping the data in $u$
    that belong to $l$ into their own pure subtree. $u$ is the
    lowest node of $t$ (see \autoref{fig:canonical}) that has data from $l$
    in both of its children.}
    \label{fig:canonicalgrouping}
\end{figure}

We now perform a constrained-SPR move to detach
the pure subtree of $l$ and regraft it to
the edge between $s$ and $t$. This is a permissible
move since $l$ is its own connected component in 
$G_s$. We now have the third tree in \autoref{fig:supcanonical}.
We now perform a final constrained-SPR
move, moving the subtree of $o$ to the opposite
side of $s$, creating the proper canonical partition
of $(l, r)$.
To entirely convert $T$ into $T_C$, we need to recurse
and convert every node in $T$ into canonical form.

\begin{figure*}
    \centering
    \includegraphics[width=\textwidth]{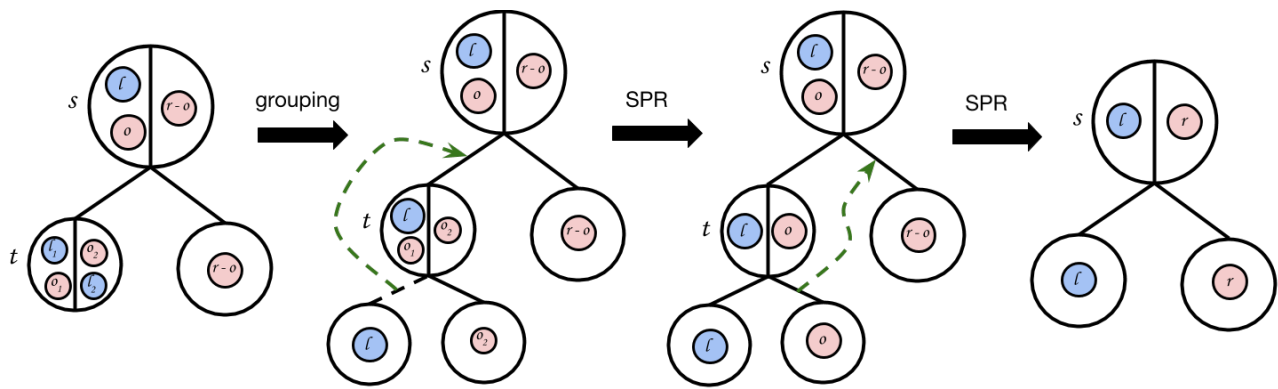}
    \caption{The process of converting $s$ into canonical form.
    We first group nodes from $l$ into their own pure subtree, then perform
    two constrained-SPR moves to put $s$ into canonical form.}
    \label{fig:supcanonical}
\end{figure*}

Every constrained-SPR move has an associated reverse constrained-SPR move that
performs the opposite transition.
The reverse constrained-SPR move selects the same subtree as the forward one
and prunes it, and just regrafts the subtree to its original location
before the forward move.
We know that this regraft has non-zero probability because the original tree
did not violate constraints.
Thus, since any arbitrary $T$ can be converted into $T_C$ and since each move
has a non-zero probability reverse move,
$T_C$ can be converted into an arbitrary tree $T'$ and we have a non-zero probability
path to convert $T$ into $T'$.

\end{proof}

\section{Additional Results}

\begin{figure*}[h]
    \begin{subfigure}[b]{0.5\textwidth}
        \centering
        \includegraphics[width=\textwidth]{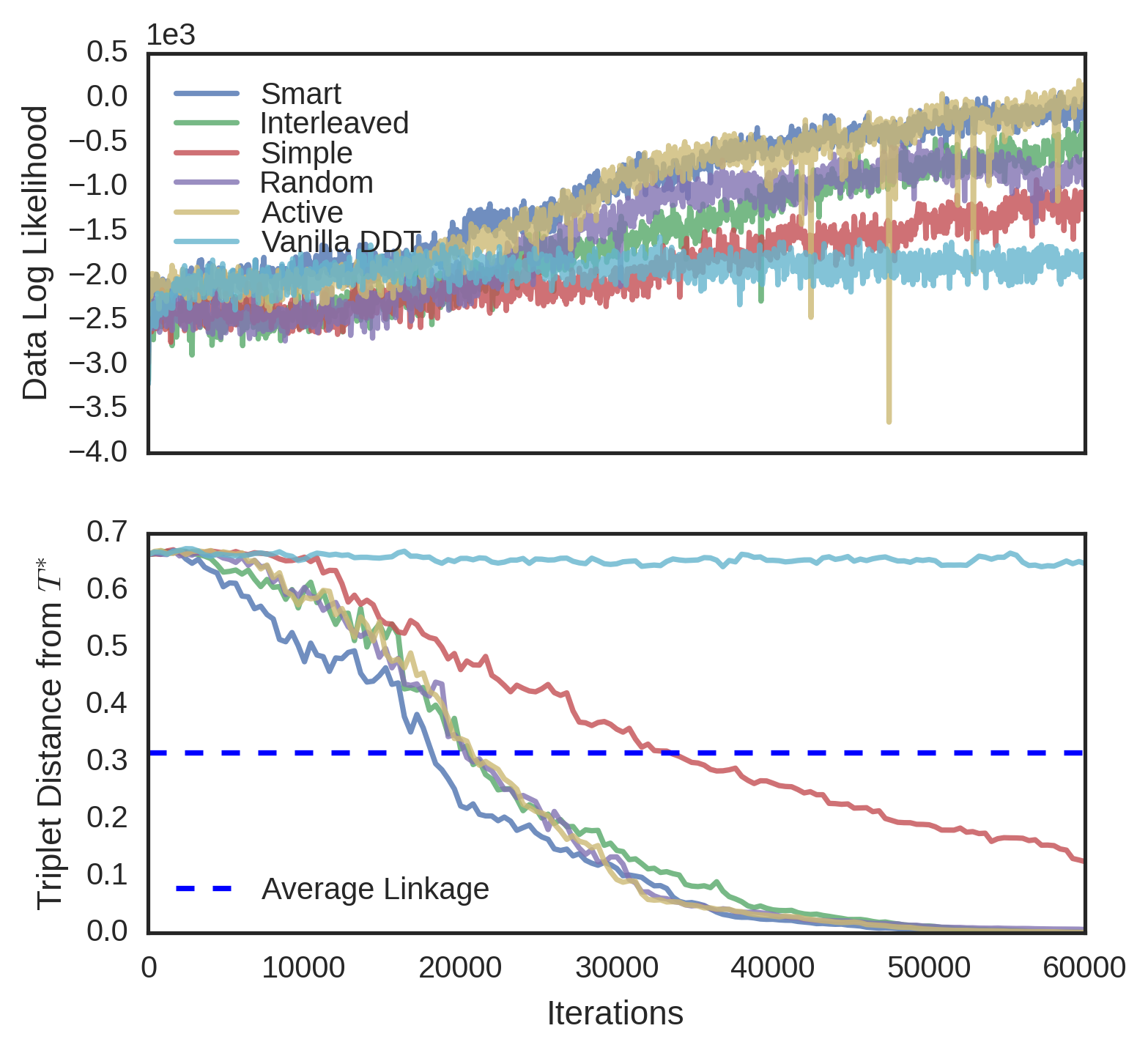}
        \caption{Zoo}
        \label{fig:zoo-result}
    \end{subfigure}
    \begin{subfigure}[b]{0.5\textwidth}
        \centering
        \includegraphics[width=\textwidth]{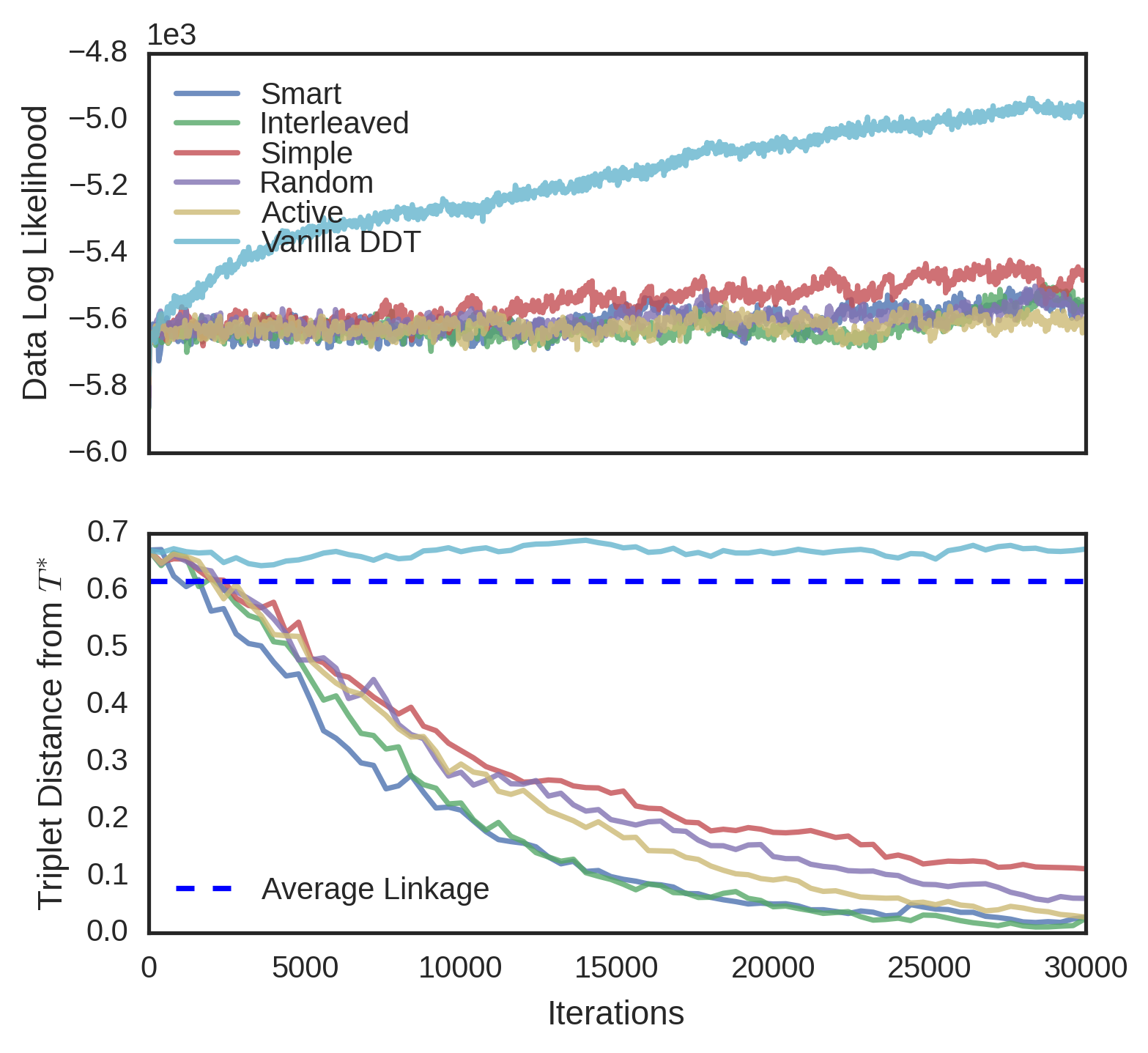}
        \caption{20 Newsgroups}
        \label{fig:20news-result}
    \end{subfigure}
    \caption{The average of four runs of constrained-SPR samplers
        for the Zoo dataset and the 20 Newsgroups dataset, using 5 different querying schemes.
        A query was made every 100 iterations.}
    \label{fig:main-results2}
\end{figure*}

\end{document}